%%%%%%%%%%%%%%%%%%%%%%%%%%%%%%%%%%%%%%%%%%%%%%%%%%%%%%%%%%%%%%%%%%%%%%%%%%%%%%%%
%2345678901234567890123456789012345678901234567890123456789012345678901234567890
%        1         2         3         4         5         6         7         8

\documentclass[letterpaper, 10 pt, conference]{ieeeconf}  % Comment this line out if you need a4paper

\IEEEoverridecommandlockouts                              % This command is only needed if 
                                                          % you want to use the \thanks command

\overrideIEEEmargins                                      % Needed to meet printer requirements.

%In case you encounter the following error:
%Error 1010 The PDF file may be corrupt (unable to open PDF file) OR
%Error 1000 An error occurred while parsing a contents stream. Unable to analyze the PDF file.
%This is a known problem with pdfLaTeX conversion filter. The file cannot be opened with acrobat reader
%Please use one of the alternatives below to circumvent this error by uncommenting one or the other
%\pdfobjcompresslevel=0
%\pdfminorversion=4

% See the \addtolength command later in the file to balance the column lengths
% on the last page of the document

% The following packages can be found on http:\\www.ctan.org
\usepackage{times}
\usepackage{cite}
\usepackage{multicol}
\usepackage[bookmarks=true]{hyperref}
\usepackage{amsmath,amssymb,amsfonts}

\usepackage{graphicx}
\usepackage{float}
\usepackage{textcomp}
\usepackage{xcolor}
\usepackage{balance}
\usepackage{lipsum}

\usepackage{enumitem}
\usepackage[english]{babel}
\usepackage{algorithm}
\usepackage{algorithmic}
\usepackage{booktabs}

\newtheorem{theorem}{Theorem}

%[theorem]

\newtheorem{assumption}{Assumption}

\DeclareMathOperator*{\argmin}{arg\,min}

\newtheorem{definition}{Definition}
\newtheorem{proposition}{Proposition}
\newtheorem{problem}{Problem}

% paper title
\title{\LARGE \bf
Sample-Based Hybrid Mode Control: Asymptotically Optimal Switching of Algorithmic and Non-Differentiable Control Modes
}% You will get a Paper-ID when submitting a pdf file to the conference system

% \author{
%     Author Names Omitted for Anonymous Review.% <-this % stops a space
% }
% 
\author{Yilang Liu$^{1}$, Haoxiang You$^{1}$, Ian Abraham$^{1}$
\thanks{$^{1}$Yilang Liu, Haoxiang You, and Ian Abraham are with the Department of Mechanical Engineering and Material Science, Yale University, 17 Hillhouse Avenue, New Haven, CT 06520, USA
        {\tt\small yilang.liu@yale.edu; haoxiang.you@yale.edu; ian.abraham@sydney.edu.au}}}

% \author{Albert Author$^{1}$ and Bernard D. Researcher$^{2}$% <-this % stops a space
% \thanks{*This work was not supported by any organization}% <-this % stops a space
% \thanks{$^{1}$Albert Author is with Faculty of Electrical Engineering, Mathematics and Computer Science,
%         University of Twente, 7500 AE Enschede, The Netherlands
%         {\tt\small albert.author@papercept.net}}%
% \thanks{$^{2}$Bernard D. Researcheris with the Department of Electrical Engineering, Wright State University,
%         Dayton, OH 45435, USA
%         {\tt\small b.d.researcher@ieee.org}}%
% }

\begin{document}

% Yilang Liu, Haoxiang You, Ian Abraham

% \author{\authorblockN{Yilang Liu}
% \authorblockA{Mechanical Engineering\\
% Yale University\\
% New Haven, CT 06511\\
% Email: mshell@ece.gatech.edu}
% \and
% \authorblockN{Homer Simpson}
% \authorblockA{Twentieth Century Fox\\
% Springfield, USA\\
% Email: homer@thesimpsons.com}
% \and
% \authorblockN{James Kirk\\ and Montgomery Scott}
% \authorblockA{Starfleet Academy\\
% San Francisco, California 96678-2391\\
% Telephone: (800) 555--1212\\
% Fax: (888) 555--1212}}
% % avoiding spaces at the end of the author lines is not a problem with
% conference papers because we don't use \thanks or \IEEEmembership

\maketitle
\thispagestyle{empty}
\pagestyle{empty}

\begin{abstract}

    This paper investigates a sample-based solution to the hybrid mode control problem across non-differentiable and algorithmic hybrid modes. 
    Our approach reasons about a set of hybrid control modes as an integer-based optimization problem where we select what mode to apply, when to switch to another mode, and the duration for which we are in a given control mode. 
    A sample-based variation is derived to efficiently search the integer domain for optimal solutions.
    We find our formulation yields strong performance guarantees that can be applied to a number of robotics-related tasks. 
    In addition, our approach is able to synthesize complex algorithms and policies to compound behaviors and achieve challenging tasks. 
    Last, we demonstrate the effectiveness of our approach in a real-world robotic examples that requires reactive switching between long-term planning and high-frequency control. Videos are available on  \url{https://yilangliu.github.io/hybrid_mode_sampling/}
    \end{abstract}

\section{INTRODUCTION}
    
    Modern agile robotic systems must dynamically switch between discrete modes—such as making and breaking contacts—to synthesize complex behaviors like locomotion and manipulation. 
    Traditional continuous control methods struggle with these abrupt mode switches, often resulting in instability or suboptimal performance.
    These events often cause abrupt changes in dynamics and constraints, requiring either highly reactive control solutions or more algorithmic planning-based controllers that can handle contact-based reasoning. 
    While it is possible to construct controllers within each of these distinct operating conditions, optimally transitioning between modes becomes challenging, especially when the underlying task requires multiple transitions with multiple hybrid modes. 

    Hybrid control theory offers principled methods that address these challenges by explicitly coordinating discrete mode switches with continuous control inputs. 
    It specifically addresses the fact that many robotic systems undergo mode changes\cite{hybridctrl_bipedal, hybridctrl_bipedal2,seqactionctrl}. 
    For instance, a legged robot system periodically transitions from stance to swing phases \cite{swithtimeop}, while an in-hand manipulation task switches between grasping and free motion \cite{swithtimeop3}. 
    However, adapting hybrid control to arbitrary control modes, e.g., ones that solve algorithmic-based control, or require reasoning of contact dynamics becomes prohibitively challenging due to the combinatorial complexity of switching mode optimization~\cite{hybridctrl4, liu2020design}.

    In this work, we propose a variation to the hybrid control problem that can handle algorithmic and non-differentiable control modes.  
    Our approach formulates the hybrid control problem as a recursive integer-based search problem over the (1) the hybrid mode, (2) the discrete application time of the mode, and (3) the duration of the control mode. 
    Since we operate in discrete-time the underlying search problem can be solved exactly through iterative search. 
    We propose a sample-based version to uniformly sample the exhaustive set without replacement. 
    We find that the sample-based approach has strong convergence guarantees and can find mode switching sequences without needing to reason about the composition of the specific mode. 
    We compare against modern trajectory optimization techniques~\cite{Mppi, predictivesampling, cem} and show that the proposed method is able to achieve significantly better performance by synchronizing more complex hybrid modes. 

    We also validate our approach on a real-world robotic platform, showcasing its practical effectiveness for demanding tasks that transition between stabilizing controllers through bridging of a model-predictive controller. Our contributions can be summarized as below:

    \begin{enumerate}
    \item A novel, iterative sample-based formulation of the hybrid control sequencing problem,
    \item provable performance guarantees on optimizing mode sequencing, and
    \item demonstration of complex mode switching between stabilizing controllers and mpc-based controllers in the real-world quadruped experiment.
\end{enumerate}

   \begin{figure}[t]
      \centering
        \includegraphics[width=\linewidth]{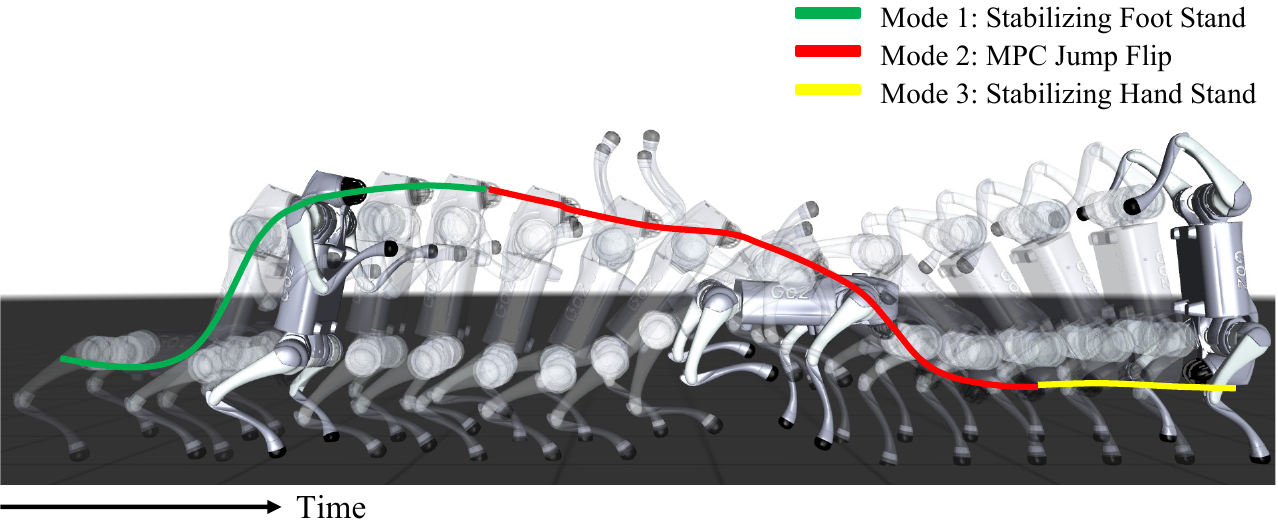}
        \vskip -5pt
        \caption{\textbf{Sample-Based Hybrid Mode Switching Enables Extreme Control Variations.}  We show the time-elapsed figure of the Unitree Go2 quadruped transitioning across three distinctive control modes (foot stand, jump flipping, and handstand) demonstrating that our method can achieve extremely agile motor skills within sharp transition times. The green, red, and yellow lines denote the head motion during each mode transitions. }
        \label{fig:figure_1}
        \vskip -11pt
    \end{figure}

The remainder of the paper is organized as follows: Section \ref{sec:related work} provides related work, Section \ref{sec:preliminaries} introduces the hybrid control problem. Section \ref{sec:method} describes our proposed sample-based hybrid mode control approach, Section \ref{sec:experiments} provides performance results of our proposed method, and last conclusion and limitation are presented in Section \ref{sec:conclusion}.

\section{Related Work}
\label{sec:related work}
\subsection{Hybrid Control}
    The study of hybrid control systems addresses a canonical problem in robotic systems where dynamic switching between discrete modes, such as making or breaking contacts in locomotion or transitioning between manipulation phases \cite{you2025acceleratingvisualpolicylearningparallel, hybridctrl_bipedal2, hybridctrl_bipedal}, needs to be tightly scheduled. Hybrid control problems are typically solved by optimizing with switched dynamical systems \cite{1272934}, time \cite{6669577}, or controllers \cite{EGERSTEDT1999617}. However, they struggle to scale to high-dimensional systems due to two key challenges: (1) the objective landscape is highly nonconvex \cite{seqactionctrl}, and (2) computational burden becomes intractable due to an increasing number of mode switches \cite{6426270}.

     To mitigate these issues, prior works consider designing multiple simplified dynamics systems for high-dimensional hybrid control tasks \cite{simp_model2}. For example, the quadruped dynamical systems are separated into front leg control and back leg control to ensure stable foot placements for the robot \cite{6094601}. However, using a simplified dynamical system loses the possibility to fully exploit the whole-body capability. Other methods predefine the mode sequence in which contacts are made and broken. In this case, the optimization problem becomes easy to solve \cite{simp_model5}. However, predefined modes constrain the robot's behavior to a predetermined trajectory.

     In this work, we proposed a sampling-based approach for solving the hybrid control problem. Our method does not rely on pre-sequencing modes or simplified models and can synthesize complex and agile behaviors. Moreover, our method synthesizes extreme motor skills and conducts computations online.

\subsection{Sample-based Control}
    Sample-based control methods \cite{Mppi2, es, christian_2} have recently emerged as a simple yet effective approach for solving high-dimensional robotics tasks such as legged locomotion  \cite{fullordersampling, You_1, liu_1} and manipulation \cite{sample_manipulation1,sample_manipulation2,hybridcontrol2, sample_manipulation3, wang_3}. Rather than relying on explicit gradient-based techniques, sample-based controls begin by sampling a control sequence from an initial distribution, executing a forward rollout of that sequence, and then adjusting the sampling distribution based on the resulting costs. This gradient-free nature makes sample-based control particularly well-suited for complex, non-differentiable systems where traditional optimization methods may struggle. Recent studies also show that stronger derivative modeling and stable value-learning formulations can directly improve control optimization behavior \cite{wang_1, wang_2, You_2}.

    Despite the effectiveness, sample-based control methods exhibit two limitations. First, they often treat control at each timestep as an independent variable, ignoring the inherent hybrid structure of robotic tasks \cite{liu2010sampling}. Second, the number of samples required to adequately explore the control space scales exponentially with the planning horizon. This exponential growth makes sample-based methods computationally infeasible for long-horizon tasks.

    \emph{Our work unifies the strengths of hybrid control and sample-based optimization}. By reparameterizing the key decision variables, i.e., \textit{when} and \textit{how long} to apply control, our approach can discover complex behaviors in contact-rich scenarios Additionally, the number of decision variables in our formulation is independent of time horizon, effectively reducing the search space for long-horizon tasks like in-hand manipulation. As a result, we effectively mitigate the exponential growth in sample requirements, making it feasible to handle both high-dimensional and long-horizon robotic tasks.
    
\section{Problem Formulation: The (Indefinite) Hybrid Control Problem}
\label{sec:preliminaries}
    In this section, we present a general formulation of the \emph{indefinite} hybrid-mode switching control problem. 
    \begin{definition} \textit{(Hybrid Mode System)}
        Let the hybrid (autonomous) dynamical system with $M \in \mathbb{Z}_+$ modes be defined as 
        \begin{equation}\label{eq:dynamics}
            \dot{x} = f_m(x(t), t), \, m  \in \mathcal{M}
        \end{equation}
        where $\mathcal{M}=\{1,2,\ldots, M \}$, $x(t) \in \mathcal{X} \subseteq \mathbb{R}^n$ is the state, and $f_m: \mathcal{X} \times \mathbb{R}_+ \to \mathcal{T}_\mathcal{X}$ are (potentially non-differentiable or algorithmic) hybrid modes. 
    \end{definition}
    An example of non-differentiable and algorithmic hybrid modes are state transitions subject to open-loop model-predictive controller (e.g., model-predictive path integral control~\cite{Mppi}), learned policies~\cite{learningwalk}, or high-frequency whole-body controllers~\cite{wholebodympc}.
    
    We interested in minimizing a performance metric $\mathcal{J} : \mathcal{X}\times[0,t_f] \to \mathbb{R}$ for some time window $t \in [0,t_f]$ by selecting a sequence of modes $\{m_1, \dots m_i, \ldots m_I\}$ for $I<\infty$ instances that are applied at non-overlapping time $\tau_i \in [0,t_f]$ for a time duration of $\lambda_i \le t_f - \tau_i$. 
    \begin{definition} \textit{(Continuous-Time Hybrid Mode Switching Set)}
        The set of non-overlapping switches is given as  
        \begin{equation}
            \begin{split}
                \mathcal{T} = &\Big\{ (m, \tau, \lambda)_i \,\, | \,\,\forall i \in \{1, \ldots , I\}, \\ 
                & \tau_i \in [0,t_f], \lambda_i \le t_f - \tau_i, \tau_i+\lambda_i=\tau_{i+1} \Big\}
            \end{split} \in \mathbb{M}
        \end{equation}
        where the tuple $(m,\tau,\lambda)_i$ represent the $i^\text{th}$ mode transition at time $\tau$ for a duration of $\lambda$, and $\mathbb{M}$ is the set of all possible mode transitions. 
    \end{definition}

        Given the definition for the set of non-overlapping switches and hybrid modes, we formalize the optimization problem to find the optimal switching set $\mathcal{T}^\star$.
        
        \begin{problem} \textit{(The Indefinite Hybrid Mode Switching Problem)}
        Let $\mathcal{J} : \mathbb{M} \to \mathbb{R}$ be a performance metric and $\mathcal{M}$ be a set of valid hybrid modes. Then the set of mode transitions $\mathcal{T} \in \mathbb{M}$ that minimizing $\mathcal{J}$ is given as 
        \begin{align}\label{eq:indefinite_hcp}
            \begin{split}
            &\min_\mathcal{T} \, \mathcal{J}(\mathcal{T}) = \int_{t=0}^{t_f} \ell(x(s)) ds \\ 
            & \text{subject to} \\ 
            & \mathcal{T} = \Big\{ (m, \tau, \lambda)_i \,\, | \,\,\forall i \in \{1, \ldots , I\}, \\ 
            & \hspace{5em} \tau_i \in [0,t_f], \lambda_i \le t_f - \tau_i, \tau_i+\lambda_i=\tau_{i+1} \Big\}\\ 
            & \dot{x} = f_m(x(t), t), x(0) = x_0, m_i \in \mathcal{M}
            \end{split}
        \end{align}
        where $\ell: \mathcal{X} \to \mathbb{R}$ is a cost function that defines a high-level objective.
        \end{problem}

        In general~\eqref{eq:indefinite_hcp} is challenging to solve as (1) $f_m$ needs to be at least continuous and differentiable\footnote{~\cite{ansari_sequential_nodate} presents a solutions for impacting systems, but does require gradient knowledge before and after the impact.}, and (2) there can be an \emph{indefinite} number of switches $I$ which makes the problem intractable. 
        Existing hybrid control methods generally focus on solving the indefinite switching through formulating variations of~\eqref{eq:indefinite_hcp} by either defining a finite set of mode transitions (i.e., $I$ is given)~\cite{QuadrupedCapturability}, or iteratively solving for single mode transitions one at a time and ``stitching'' together the switching mode set. 
        More specifically, we define the single-mode hybrid control problem as a means to reconcile infinitely many switches into a single mode switching problem. 
        \begin{problem} \textit{(The Single Switch Hybrid Mode Control Problem)}
            Let $\mathcal{J} : \mathbb{M} \to \mathbb{R}$ be a performance metric and $\mathcal{T} \in \mathbb{M}$ be a nominal switching sequence. Then the set of single mode transition tuple $(m, \tau, \lambda)$ that minimizing $\mathcal{J}$ is given as 
        \begin{align}\label{eq:single_hcp}
            \begin{split}
            &\min_{(m, \tau, \lambda)} \, \mathcal{J}( (m, \tau, \lambda) \cup \mathcal{T}) = \int_{t=0}^{t_f} \ell(x(s)) ds \\ 
            & \text{subject to} \\ 
            & \mathcal{T} = \Big\{ (m, \tau, \lambda)_i \,\, | \,\,\forall i \in \{1, \ldots , I\}, \\ 
            & \hspace{5em} \tau_i \in [0,t_f], \lambda_i \le t_f - \tau_i, \tau_i+\lambda_i=\tau_{i+1} \Big\}\\ 
            & \dot{x} = f_m(x(t), t), x(0) = x_0, m_i \in \mathcal{M}
            \end{split}
        \end{align}
        where the operation $(m, \tau, \lambda) \cup \mathcal{T}$ stitches the single mode transition tuple $(m, \tau, \lambda)$ with a given default switching mode set $\mathcal{T}$. 
        \end{problem}

        Iteratively solving~\eqref{eq:single_hcp} can eventually lead to solving~\eqref{eq:indefinite_hcp}~\cite{ansari_sequential_nodate}; however, existing solutions are limited by the choice of hybrid mode which leads to performance gaps, minimal solution guarantees, and often requires heuristic-based mode switching~\cite{ansari_sequential_nodate}.
        Integrating and scheduling non-differentiable, and algorithmic hybrid modes has the potential to utilize more complex behaviors and reduce the performance gap, but scheduling remains an open challenge.
        In the following section, we present a potential solution to this problem via sample-based optimization.

    \section{Sample-Based Hybrid Mode Control} \label{sec:method}
    
        We propose to solve~\eqref{eq:indefinite_hcp} by first formulating a discrete-time variation of the indefinite hybrid mode-switching problem. 
        Then, we show that one can iteratively solve for singular switching modes via sample-based optimization that yield strong asymptotic convergence guarantees. 
        
        \subsection{The Discrete (Definite) Hybrid Control Problem}

            Let us define a discrete time index $k$ where $t_k = k \Delta t$ is the time at the $k^\text{th}$ index, and $\Delta t \in\mathbb{R}_+$ is a time discretization. 
            Here, we choose $\Delta t$ to be the slowest hybrid state mode transition $f_m$ and
            \begin{equation}
            \begin{split}
                x_{k+1} &= F_m(x_k, k) \\ 
                &= x(t_k + \Delta t) = x(t_k) + \int_{s={t_k}}^{t_k + \Delta t} f_m(x(s), s) ds
            \end{split}
            \end{equation}
            is the mode transition function. In general, $f_m$ may not be continuous and $F_m$ is solved via an algorithmic optimization (e.g., contact-based dynamics~\cite{hybridctrl_1}). 
            The idea is that the underlying control problem with state feedback will operate on some \emph{fixed} frequency (i.e., discretized time).  
            Thus, we abandon continuous-time formulation (which is limited in runtime computation) for a simpler formulation of mode scheduling in discrete-time. Next, we define the set of mode transition in discrete-time, 
            \begin{definition} \textit{(Discrete-Time Hybrid Mode Switching Set)}
                Let $T \Delta t = t_f$ be the discrete planning time and $\mu_i, \nu_i \in \mathbb{Z}_+$ are the discrete-time mode application and duration times respectively. 
                Then the discrete-time hybrid mode switching set is given as 
            \begin{align}
                \begin{split}
                    \mathcal{K} = &\Big\{ (m, \mu, \nu)_i \,\, | \,\,\forall i \in \{1, \ldots , I\}, \\ 
                    & \hspace{1em} \mu_i \in [0,T], \nu_i \in [0,T - \mu_i], \mu_i+\nu_i=\mu_{i+1} \Big\}
                \end{split} \in \mathcal{M}^T
            \end{align}             
            \end{definition}
            where $\mathcal{M}^T \subset \mathbb{Z}_+^T$ is the space of $T$ hybrid mode sequences. \footnote{The set here in discrete-time reduces to a sequence of modes $m_i$ that starts at discrete time $\mu_i$ repeat according to $\nu_i$ or a vector of integer values, e.g., $\mathcal{K}=\{1,4,4,4,5,2,1,1\}$ for $\mathcal{M}=[1,\ldots, 5]$. } 

            Discretizing the performance metric $\mathcal{J}$ leads to the following mode-scheduling optimization problem. 
            \begin{problem} \textit{(The Discrete-Time Hybrid Mode Switching Problem)}
                Let $\mathcal{K} \in \mathcal{M}^T$ and $\mathcal{J} : \mathcal{M}^T \to \mathbb{R}$ be a discretized performance metric with $T \Delta t = t_f$ as the discretized planning time. 
                Then an optimal mode sequence $\mathcal{K}^\star$ can be obtained by solving the following problem
                \begin{equation}\label{eq:indefinite_discrete_hcp}
                    \begin{split}
                    &\min_\mathcal{K} \, \mathcal{J}(\mathcal{K}) = \sum_{k=0}^{T} \ell(x_k) \Delta t \\ 
                    & \text{subject to} \\ 
                    & \mathcal{K} = \Big\{ (m, \mu, \nu)_i \,\, | \,\,\forall i \in \{1, \ldots , I\}, \\ 
                    & \hspace{2em} \mu_i \in [0,T], \nu_i \in [0,T - \mu_i], \mu_i+\nu_i=\mu_{i+1} \Big\} \\
                    & x_{k+1} = F_m(x_k, k), x(0) = x_0, m_i \in \mathcal{M}
                    \end{split}.
                \end{equation}                
            \end{problem}
            Note that there are at most $T$ hybrid mode transitions which makes this problem definite with $\mathcal{K}=\{m_0, m_1, \ldots, m_{T-1}\} \in \mathcal{M}^T$ being a vector containing $T$ transition modes (that may repeat according to the discrete duration).
            In fact, since $m, \mu$ and $\nu$ are all \emph{discrete integers}, the problem can be \emph{exactly} solved through brute-force search that scales $\mathcal{O}(M^T)$.
            Exact solutions to~\eqref{eq:indefinite_discrete_hcp} can be obtained with modern GPUs; however, a simpler recursive formulation can be derived which is used to inform us of formal performance guarantees. 

        \subsection{Iterative Hybrid Mode Control Sequencing}
            In order to solve~\eqref{eq:indefinite_discrete_hcp}, we propose a iterative method rather than a brute-force approach that provides formal guarantees. 
            \begin{problem}\textit{(Discrete-Time Single Switch Hybrid Mode Control Problem)}
                Let us define a discrete default mode transition set $\mathcal{K}_\text{def} \in \mathcal{M}^T$. 
                Then, the solution $(m, \mu, \nu)^\star$ that minimizes $\mathcal{J}$ is given as 
                \begin{equation}\label{eq:single_discrete_hcp}
                    \begin{split}
                    &(m, \mu, \nu)^\star = \argmin_{(m, \mu, \nu)} \, \mathcal{J}((m, \mu, \nu) \cup\mathcal{K}_\text{def}) = \sum_{k=0}^{T} \ell(x_k) \Delta t \\ 
                    & \text{subject to} \\ 
                    & \mathcal{K}_\text{def} = \Big\{ m_i \forall i \in [0,T-1]\Big\}\\ 
                    & x_{k+1} = F_m(x_k, k), x(0) = x_0, m_i \in \mathcal{M}
                    \end{split}
                \end{equation}
                where $(m, \mu, \eta) \cup\mathcal{K}_\text{def}$ produces an updated default schedule $\mathcal{K}_\text{def}(k) = m \forall k\in[\mu, \mu+\eta]$.
            \end{problem}

            Problem~\eqref{eq:indefinite_discrete_hcp} can be solved through iterative refinement of $\mathcal{K}_\text{def}$ to obtain the optimal discrete-time mode transition set $\mathcal{K}^\star$. Here,~\eqref{eq:single_discrete_hcp} can be exactly solved via brute-force search with complexity $\mathcal{O}(MT(T+1)/2)$ which is much faster than $\mathcal{O}(M^T)$. We outline the iterative approach in Algorithm~\ref{alg:1}. 
            
            \begin{assumption}
                There exists a sequence of modes modes $\mathcal{K}^\star =\{ m_i \}_{i=1}^M$ such that $\mathcal{J}(\mathcal{K}^\star)$ is minimized. 
            \end{assumption}
            
            \begin{proposition} \label{prop:1} \textit{(Locally Optimal Schedules)} Let $f_m \forall m \in \mathcal{M}$ be deterministic, $\mathcal{K}^\star$ the optimal switching mode schedule, and $\mathcal{J}(\mathcal{K}^\star)$ a local optima. Then, $\mathcal{K}$ is optimal if $\forall \mu \in[0, T-1]$ and $\forall m \in \mathcal{M}$, $\nexists \nu \in [1, T-\mu]$ such that 
            \begin{equation}
                \mathcal{J}((m, \mu, \nu)\cup \mathcal{K}) \le \mathcal{J}(\mathcal{K}),
            \end{equation} 
            $\mathcal{J}(\mathcal{K}) = \mathcal{J}(\mathcal{K}^\star)$, and $\mathcal{K}=\mathcal{K}^\star$ is a local optima. 
            \end{proposition}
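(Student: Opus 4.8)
The plan is to prove the asserted characterization in two halves: first, that the non‑existence of an improving single switch is equivalent to $\mathcal{K}$ being a local minimum of Problem~\eqref{eq:indefinite_discrete_hcp} over $\mathcal{M}^T$; second, that combined with Assumption~1 and the hypothesis $\mathcal{J}(\mathcal{K})=\mathcal{J}(\mathcal{K}^\star)$ this forces $\mathcal{K}$ to coincide with an optimal schedule. First I would make the word ``local'' precise in this discrete setting by introducing the single‑switch neighborhood $\mathcal{N}(\mathcal{K})=\{(m,\mu,\nu)\cup\mathcal{K}\,:\,m\in\mathcal{M},\ \mu\in[0,T-1],\ \nu\in[1,T-\mu]\}$, and observing that specializing to $\nu=1$ recovers precisely those schedules that differ from $\mathcal{K}$ at a single time index, so $\mathcal{N}(\mathcal{K})$ contains the entire Hamming‑$1$ ball around $\mathcal{K}$ in $\mathcal{M}^T$. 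Because each $f_m$ is deterministic, every element of $\mathcal{N}(\mathcal{K})$ induces a unique trajectory $\{x_k\}$ and hence a well‑defined value of $\mathcal{J}$, so the comparison $\mathcal{J}((m,\mu,\nu)\cup\mathcal{K})\le\mathcal{J}(\mathcal{K})$ is unambiguous.

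For the ``if'' direction I would argue directly: if no triple $(m,\mu,\nu)$ makes $\mathcal{J}((m,\mu,\nu)\cup\mathcal{K})$ strictly smaller than $\mathcal{J}(\mathcal{K})$, then in particular no single‑index change lowers the cost, which is exactly the statement that $\mathcal{K}$ is a local minimum of $\mathcal{J}$ over $\mathcal{M}^T$ under the Hamming topology. Here I would explicitly disambiguate the non‑strict inequality in the statement: a switch $(m,\mu,\nu)$ whose mode $m$ equals the mode $\mathcal{K}$ already assigns on $[\mu,\mu+\nu]$ leaves $\mathcal{K}$ unchanged and trivially attains equality, so the condition must be read as ``the only switches with $\mathcal{J}((m,\mu,\nu)\cup\mathcal{K})\le\mathcal{J}(\mathcal{K})$ are those that do not alter $\mathcal{K}$,'' i.e.\ no distinct neighbor is weakly better. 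With that reading, local optimality is immediate; the converse is the contrapositive (any improving single switch exhibits a strictly better neighbor, so $\mathcal{K}$ is not locally optimal).

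To reach $\mathcal{K}=\mathcal{K}^\star$ I would then invoke Assumption~1, which guarantees a minimizing schedule exists, together with the hypothesis $\mathcal{J}(\mathcal{K})=\mathcal{J}(\mathcal{K}^\star)$: since $\mathcal{K}$ attains the optimal value and admits no distinct weakly‑better neighbor, $\mathcal{K}$ is itself a global minimizer, and — up to the equivalence that identifies schedules inducing the same state trajectory, which is all that $\mathcal{J}$ distinguishes — it equals $\mathcal{K}^\star$. I would also record the algorithmic consequence: because each pass of Algorithm~\ref{alg:1} applies an exactly‑solved single switch from~\eqref{eq:single_discrete_hcp}, the cost sequence is non‑increasing and, over the finite set $\mathcal{M}^T$, must terminate at a schedule for which this certificate holds.

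The main obstacle I anticipate is not the Hamming‑ball containment, which is immediate once $\nu=1$ is plugged in, but pinning down the exact logical content of the non‑strict ``$\le$'' condition and, more seriously, justifying the jump from single‑switch (local) optimality to the asserted identity $\mathcal{K}=\mathcal{K}^\star$: in general, single‑switch stationarity need not imply global optimality, so the argument must lean essentially on the extra hypothesis $\mathcal{J}(\mathcal{K})=\mathcal{J}(\mathcal{K}^\star)$ and on treating $\mathcal{K}^\star$ as \emph{a} minimizer furnished by Assumption~1 rather than a unique one. I would therefore phrase the conclusion as identifying $\mathcal{K}$ with an optimal schedule in the quotient of $\mathcal{M}^T$ by trajectory‑equivalence, which is the only sense in which the identity is meaningful for this cost.
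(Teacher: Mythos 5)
Your proposal is correct and, at its core, does the same thing the paper does: it observes that the non-existence of an improving single switch is, essentially by definition, local optimality of $\mathcal{K}$ with respect to the single-switch neighborhood. The paper's own proof is a one-sentence version of exactly this ("if no $(m,\mu,\nu)$ with $\nu\ge 1$ reduces the cost, the only admissible move is $\nu=0$, so $\mathcal{K}$ is a local optimum") and stops there. Where you go further is in making the statement precise enough to be provable: the observation that the $\nu=1$ switches already contain the Hamming-$1$ ball in $\mathcal{M}^T$, the disambiguation of the non-strict ``$\le$'' (switches that re-apply the mode $\mathcal{K}$ already assigns trivially attain equality and must be excluded from the ``improving'' set), and the explicit acknowledgment that single-switch stationarity cannot by itself yield $\mathcal{K}=\mathcal{K}^\star$ for a \emph{global} minimizer. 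On that last point your reading diverges from the paper's: you treat $\mathcal{J}(\mathcal{K})=\mathcal{J}(\mathcal{K}^\star)$ as an additional hypothesis needed to close the local-to-global gap, whereas the paper appears to intend $\mathcal{K}^\star$ to denote only a local optimum and the equalities $\mathcal{J}(\mathcal{K})=\mathcal{J}(\mathcal{K}^\star)$, $\mathcal{K}=\mathcal{K}^\star$ as (essentially notational) conclusions. Your reading is the more defensible one given the ambiguity, and your caveat about identifying schedules only up to trajectory equivalence is a genuine correction the paper does not make; with it, your argument is sound, and without some such hypothesis the identification with a global optimizer would indeed be false.
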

            \begin{proof}
                Enumerating through each mode $m$ and discrete application time $\mu$ and checking for all $\nu \in [1,T-\mu]$, if the cost does not reduce, then the only viable solution is to apply all modes for a duration $\nu=0$ resulting in no reduction of the cost, and the schedule $\mathcal{K}$ is at a local optima. 
            \end{proof}

            Now that we know of the existence of local optima and a condition for which to check. 
            We seek to prove that iteratively solving~\eqref{eq:single_discrete_hcp} leads to the local optimal solution. 

            \begin{theorem} \label{thm:1}\textit{(Asymptotic Convergence)} Let $\mathcal{J} : \mathcal{M}^T \to \mathbb{R}$, and $V_{(m, \mu, \eta)^\star}:\mathcal{M}^T\to\mathcal{M}^T$  where $V_{(m, \mu, \eta)^\star}(\mathcal{K}^k) = (m,\mu, \nu)^\star \cup \mathcal{K}^k = \mathcal{K}^{k+1}$ is an update equation to the $k^\text{th}$ hybrid transition set with solution~\eqref{eq:single_discrete_hcp}.
            Then,
            \begin{align}
                \mathcal{J}(V_{(m, \mu, \eta)^\star}(\mathcal{K})) \le J(\mathcal{K})
            \end{align}
            and $\exists K \in\mathbb{Z}_+$ such that 
            \begin{equation}
                V_{(m, \mu, \eta)^\star}(\mathcal{K}^{K+1}) = V_{(m, \mu, \eta)^\star}(\mathcal{K}^K)
            \end{equation}
            is a fixed-point for $k \in [0,\ldots, K]$.
            \end{theorem}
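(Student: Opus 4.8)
The plan is to prove the two assertions in order: first the monotone non-increase of $\mathcal{J}$ under the update operator, and then the existence of a finite index $K$ after which the iterates no longer change. For the monotonicity claim I would observe that the \emph{null} single switch is always feasible in the minimization~\eqref{eq:single_discrete_hcp}: take $\nu = 0$ and $m = \mathcal{K}(\mu)$, so that stitching $(m,\mu,\nu)\cup\mathcal{K}$ overwrites the window $[\mu,\mu]$ with the mode already present there and leaves the schedule, the rollout $x_0,\dots,x_T$, and hence the cost unchanged. Since $(m,\mu,\nu)^\star$ is a \emph{minimizer} of $\mathcal{J}((m,\mu,\nu)\cup\mathcal{K})$ over all feasible single switches, this immediately yields
\begin{equation}
\mathcal{J}\big(V_{(m,\mu,\eta)^\star}(\mathcal{K})\big) = \mathcal{J}\big((m,\mu,\nu)^\star\cup\mathcal{K}\big) \le \mathcal{J}(\mathcal{K}),
\end{equation}
and iterating gives the non-increasing chain $\mathcal{J}(\mathcal{K}^0)\ge \mathcal{J}(\mathcal{K}^1)\ge \mathcal{J}(\mathcal{K}^2)\ge\cdots$.

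For the fixed-point claim the structural fact I would exploit is that the decision space $\mathcal{M}^T$ is \emph{finite}, of cardinality $M^T$, so its image $\mathcal{J}(\mathcal{M}^T)\subset\mathbb{R}$ is a finite set (in particular bounded below, consistent with the standing assumption that a minimizing schedule exists). A non-increasing real sequence taking values in a finite set is eventually constant, so there is $K\in\mathbb{Z}_+$ with $\mathcal{J}(\mathcal{K}^{k}) = \mathcal{J}(\mathcal{K}^{K})$ for all $k\ge K$; in particular $\mathcal{J}\big(V_{(m,\mu,\eta)^\star}(\mathcal{K}^K)\big) = \mathcal{J}(\mathcal{K}^K)$, i.e. no single switch strictly decreases the cost at $\mathcal{K}^K$. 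Invoking the convention used in the proof of Proposition~\ref{prop:1} --- that when no single switch yields a strict decrease the $\argmin$ in~\eqref{eq:single_discrete_hcp} returns the null switch $\nu = 0$ --- we conclude $V_{(m,\mu,\eta)^\star}(\mathcal{K}^K) = \mathcal{K}^K$, so $\mathcal{K}^K$ is a local optimum in the sense of Proposition~\ref{prop:1}. A trivial induction then gives $\mathcal{K}^{k} = \mathcal{K}^K$ for all $k\ge K$, and hence $V_{(m,\mu,\eta)^\star}(\mathcal{K}^{K+1}) = \mathcal{K}^{K+1} = \mathcal{K}^K = V_{(m,\mu,\eta)^\star}(\mathcal{K}^K)$, which is the asserted fixed point.

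I expect the delicate step to be precisely the passage from ``the cost has stopped decreasing'' to ``the schedule has stopped changing'': without a tie-breaking rule, $\mathcal{J}$ could plateau while $V_{(m,\mu,\eta)^\star}$ cycles among distinct equal-cost schedules, and the fixed-point conclusion would fail as stated. I would therefore make the null-switch tie-breaking convention explicit (it is already implicit in Proposition~\ref{prop:1}); alternatively, one can retreat to the unconditional statement that the cost sequence attains its limiting value in finitely many steps, which needs only finiteness of $\mathcal{M}^T$ together with the monotonicity established above. The only other bookkeeping point to pin down carefully is the semantics of the stitch operator $(m,\mu,\nu)\cup\mathcal{K}$ at $\nu = 0$, so that the identity single switch is genuinely in the feasible set --- this is exactly what makes the inequality in the first part airtight, and it is also what makes the two parts of the theorem fit together.
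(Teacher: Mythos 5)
Your proof follows essentially the same route as the paper's: the null switch ($\nu=0$) is always feasible, so the single-switch minimizer can only decrease the cost, and once no strictly improving switch exists the update is a fixed point. However, your version is actually more complete than the paper's own argument, which only performs the case split (``either some $\nu\neq 0$ strictly decreases $\mathcal{J}$, or $\nu=0$ is the minimizer and we are at a fixed point'') and never explains why a finite $K$ must exist. Your observation that $\mathcal{M}^T$ is finite, hence $\mathcal{J}(\mathcal{M}^T)$ is a finite set and a non-increasing sequence in it must become constant after finitely many steps, is exactly the missing ingredient that turns the paper's case analysis into a proof of termination. Your flagged caveat is also genuine: without the null-switch tie-breaking convention the iterates could in principle cycle among distinct equal-cost schedules even though $\mathcal{J}$ has stabilized, and the fixed-point conclusion on the schedules (as opposed to the costs) would not follow; the paper implicitly assumes this convention in both Proposition~\ref{prop:1} and Algorithm~\ref{alg:1} (which terminates when $\nu=0$), so making it explicit is the right call.
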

            \begin{proof}
                Using Proposition~\ref{prop:1}, if $\exists m,\mu$ where $\nu \neq 0$ in~\eqref{eq:single_discrete_hcp}, then $\mathcal{J}(V_{(m, \mu, \eta)^\star}(\mathcal{K}^{k+1})) < J(\mathcal{K}^k)$. If $\nu=0 \forall m,\mu$ is the minimizer to~\eqref{eq:single_discrete_hcp}, then $(m,\mu, 0) \cup \mathcal{K} = \mathcal{K}$ and $V_{(m, \mu, \eta)^\star}(\mathcal{K}^{k+1}) = V_{(m, \mu, \eta)^\star}(\mathcal{K}^k)$ is a fixed point. 
            \end{proof}

            Because we discretized the underlying mode sequencing problem, Theorem~\ref{thm:1} becomes possible to state as a performance guarantee. 
            The main challenge is efficiently iterating through the solution space of~\eqref{eq:single_discrete_hcp} to exactly find the minimum $(m, \mu, \nu)^\star$. 
            With a GPU, it is possible to iterate through the solution set; however, we present a more efficient sample-based approach that allows to iterate through the solution set without a GPU.  

            \begin{algorithm}
        
            \caption{Iterative Discrete Hybrid Mode Sequencing}
            \label{alg:1}
            \begin{algorithmic}[1]
                \STATE \textbf{Initialize:} control modes $F_m$ for $m \in \mathcal{M}$, default mode sequence $\mathcal{K}_\text{def} = \{m_i \forall i \in [0,T-1] \}$, with $m_i$ is any arbitrary mode, initial condition $x_0$, planning time $T$, cost $\ell$, max iterations $\text{iter}_\text{max}$, 
                \STATE $\text{iter} \gets 0$
                \WHILE{\(\text{iter} < \text{iter}_\text{max}\) or $\mathcal{J}((m, \mu, \nu) \cup\mathcal{K}_\text{def}) \le \mathcal{J}(\mathcal{K}_\text{def})$}
                    \STATE $(m, \mu, \nu)\gets$ Solve Problem~\ref{eq:single_discrete_hcp}
                    \IF{$\nu=0$}
                        \STATE Terminate while loop
                    \ENDIF
                    \STATE $\mathcal{K}_\text{def}(k) = m \forall k \in[\mu, \mu+\nu]$
                    \STATE $\text{iter} \gets \text{iter}+1$
                \ENDWHILE
            \end{algorithmic}
        \end{algorithm}

    \subsection{Sample-Based Hybrid Mode Control}

        Here, we present a variation of the solution to Problem~\eqref{eq:indefinite_discrete_hcp} via a sample-based iteration of Problem~\eqref{eq:single_discrete_hcp}. 
        The rationale for a sampling method is that the search space can be significantly reduced from $\mathcal{O}(M\cdot T\cdot(T+1))$ to only requiring $\mathcal{O}(N)$ samples for $N$ is the number of uniform samples to check a solution. 
        \begin{assumption} \label{ass:1} \textit{(Uniqueness of Optimal Mode Transitions)}
            Given a default mode transition set $\mathcal{K}$, there exists a unique single mode transition $(m, \mu, \nu)$ that minimizes $\mathcal{J}((m, \mu, \nu)\cup \mathcal{K})$.
        \end{assumption}
        For real-valued states $x(t)$ with algorithmic modes $F_m$, Assumption~\ref{ass:1} is valid so long as $\nu \neq 0$, in which case the solver is at a local optima.
        \begin{theorem} \textit{(Convergence of Sample-Based Single-Mode Hybrid Mode Control)}
            Let $\Omega = \{ (m,\mu, \eta) \forall m \in \mathcal{M}, \mu \in [0,T-1], \nu \in[0,T-\mu] \}$ be the set of all possible $Z = M \cdot T \cdot (T+1)/2$ single mode transitions.
            In addition, let $\mathcal{U}_N(\Omega)$ be a uniform distribution that draws $N<Z$ samples without replacement from the set $\Omega$.  
            Then the optimal mode transition tuple $(m, \mu, \nu)^\star$ to Problem~\eqref{eq:single_discrete_hcp} is found with probability $\mathcal{P}((m, \mu, \nu)^\star)=N/Z$ with at most $\frac{Z}{N}$ draws.
        \end{theorem}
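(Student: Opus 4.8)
The plan is to reduce the statement to an elementary counting argument on the finite set $\Omega$, using Assumption~\ref{ass:1} to guarantee that the target of the search is a single, well-defined element. First I would invoke Assumption~\ref{ass:1}: for a fixed default schedule $\mathcal{K}$ there is exactly one tuple $\omega^\star = (m,\mu,\nu)^\star \in \Omega$ attaining $\min_{\omega \in \Omega}\mathcal{J}(\omega \cup \mathcal{K})$, the degenerate case $\nu = 0$ being excluded since by Proposition~\ref{prop:1} it corresponds to the iteration already sitting at a local optimum, where the claim is vacuous. Thus ``finding $(m,\mu,\nu)^\star$'' is precisely the event that the single element $\omega^\star$ appears among the sampled tuples, and because $\mathcal{M}^T$ and hence $\Omega$ is finite with $|\Omega| = Z = M\,T\,(T+1)/2$, an exhaustive evaluation of $\mathcal{J}(\omega \cup \mathcal{K})$ over any collection of samples identifies $\omega^\star$ whenever it is contained in that collection.

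Second, for the per-draw probability, I would use the exchangeability of sampling $N$ elements uniformly without replacement from $\Omega$: every $N$-subset of $\Omega$ is equally likely, so
\begin{equation}
\mathcal{P}\big(\omega^\star \in \mathcal{U}_N(\Omega)\big) = \frac{\binom{Z-1}{N-1}}{\binom{Z}{N}} = \frac{N}{Z},
\end{equation}
which is the stated probability; equivalently, by symmetry each of the $Z$ elements is equally likely to occupy each of the $N$ sampled slots. Third, for the ``at most $Z/N$ draws'' guarantee, I would model the repeated sampling as successive draws that continue \emph{without replacement across draws}: draw $1$ removes $N$ elements from $\Omega$, draw $2$ removes $N$ of the remaining $Z - N$, and so on. After $\lceil Z/N \rceil$ draws the pool is exhausted, so $\omega^\star$ has necessarily been sampled and, by the exhaustive evaluation of step one, detected; hence the search terminates at the optimum in at most $Z/N$ draws. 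If instead one prefers i.i.d. draws, the same bound holds in expectation, since the number of draws until the first success is geometric with parameter $N/Z$.

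The main obstacle I anticipate is not the counting itself but pinning down the interpretation of a ``draw'' and keeping the without-replacement bookkeeping consistent with Assumption~\ref{ass:1}: the argument goes through cleanly only if (i) each draw's samples are evaluated exactly, so that containment of $\omega^\star$ implies detection, and (ii) the optimizer is genuinely unique, so that there is a single success event rather than a set of near-optimal ties. I would therefore state explicitly that ties are broken by a fixed rule (or are ruled out generically for real-valued states with algorithmic $F_m$, as the paragraph preceding the theorem already notes), and handle $N \nmid Z$ via $\lceil Z/N \rceil$, observing that $\lceil Z/N \rceil \le Z/N + 1$ so the bound is essentially tight.
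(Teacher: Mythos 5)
Your proposal is correct and follows essentially the same argument as the paper: the paper obtains $\mathcal{P}((m,\mu,\nu)^\star)=N/Z$ by taking the complement of the telescoping product $\prod_{i=1}^{N}\frac{Z-i}{Z-i+1}=\frac{Z-N}{Z}$ for sequential draws without replacement, which is the same computation as your $\binom{Z-1}{N-1}/\binom{Z}{N}$ subset-counting identity, and it likewise gets the $Z/N$ bound from exhaustion of the pool across batches. Your added care about the $\lceil Z/N\rceil$ ceiling when $N\nmid Z$ and the explicit appeal to Assumption~\ref{ass:1} for uniqueness are refinements the paper leaves implicit, but they do not change the argument.
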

        \begin{proof}
            The probability of not finding the optimal single-mode transition is given by $\mathcal{P}(\neg(m, \mu, \nu)^\star) = \prod_{i=1}^N \frac{Z-i}{Z-i+1} = \frac{Z-N}{Z}$. 
            Thus, $\mathcal{P}((m, \mu, \nu)^\star) = 1-\frac{Z-N}{Z} = \frac{N}{Z}$ of drawing the optimal mode transition with at most $\frac{Z}{N}$ draws until $\mathcal{P}((m, \mu, \nu)^\star)$ = 1.
        \end{proof}
        In essence, the proposed sample-based approach takes advantage of the integer-based optimization that amounts to evaluating a list of tuples (the mode transition set). 
        Indeed, there exists many ways to approach this problem that can be done by evaluating the list of mode transitions. 
        Our proposed approach was motivated by the need to only evaluate a subsample of the mode transition set as needed. 
        An outline of the sample-based single-mode hybrid control approach is provided in Algorithm~\ref{alg:2}.
        
\begin{algorithm}
    \caption{Sample-Based Single-Mode Hybrid Mode Control}
    \label{alg:2}
    \begin{algorithmic}[1]
        \STATE \textbf{Initialize:} nominal mode sequence $\mathcal{K}$, sample set $\Omega$, $N$ total number of samples, control modes $F_m$ for $m \in \mathcal{M}$, objective $\mathcal{J}$, planning time $T$, initial condition $x_0$, max iterations $\text{iter}_\text{max} = Z/N$, $\mathcal{J}_\text{best} = \mathcal{J}(\mathcal{K}_\text{def})$.
        \STATE $\text{iter}\gets0$
        \WHILE{\(\text{iter} < \text{iter}_\text{max}\)}
            \STATE $\{ (m, \mu, \nu) \}_{i=1}^N \sim \mathcal{U}_N(\Omega)$
            \FOR{\(i=1,\,\dots,\,N\)}
                \STATE $\mathcal{K}_\text{eval} = \mathcal{K}_\text{def} \cup (m, \mu, \nu)$
                \IF{$\mathcal{J}(\mathcal{K}_\text{eval}) \le \mathcal{J}_\text{best}$} 
                    \STATE \textbf{return} $(m, \mu, \nu)$ 
                \ENDIF
            \ENDFOR
            \STATE $\Omega = \Omega \setminus \{ (m, \mu, \nu) \}_{i=1}^N $
            \STATE $\text{iter} \gets \text{iter} + 1$
        \ENDWHILE
    \end{algorithmic}
\end{algorithm}

    Algorithm~\ref{alg:2} effectively solves Problem~\eqref{eq:indefinite_discrete_hcp} by drawing $N$ uniform samples from the set of single mode transitions $\Omega$ and evaluating the performance of the mode sequence stitched with $\mathcal{K}_\text{def}$. 
    If a mode transition reduces the cost from the default, then we return the mode transition. Otherwise, we remove the drawn samples from the set $\Omega$ and resample. 
    There is a scenario where we return a sampled mode $(m, \mu, \nu)$ that reduces the cost, but another mode transition $(m, \mu, \nu')$ for $\nu' \in [0, T-\mu]$ can further reduce the cost. 
    In this case, we rely on Algorithm~\ref{alg:1} to reinitialize Algorithm~\ref{alg:2} to randomly search the set $\Omega$ once more to the improve the cost until $\nu=0$ is the only valid option (i.e., the only mode transition that maintains the cost as stationary is applying no additional mode transition).
    
    Through this sample-based approach, we are able to more efficiently search for hybrid mode sequences. As a result, we can optimize mode-sequences of complex behaviors on systems that experience non-differentiable contact, require mixing of algorithmic modes, and require composition of low-level behaviors to achieve far greater global behaviors than a single mode could. We illustrate our results in the following Section~\ref{sec:experiments}.

    \section{Experiments and Results}\label{sec:experiments}

    We present both simulation and real-world results for our method, along with comparisons to other sampling-based approaches.

    \textbf{Toy Example: Cartpole Swing Up }
    Here, we consider a cartpole swing-up task. We set the total horizon $T=100$. The state is defined as pole angle \(\theta\), cart position \(p\), pole angular velocity \(\dot{\theta}\), and cart linear velocity \(\dot{p}\). The control is the one-dimensional force applied horizontally to the cart. The pole is initialized in up-down \(\theta=\frac{1}{2}\pi\), and the goal is to swing the pole up-right and maintain balance. Hence, the stage cost is defined as \(c(\mathbf{x}_t,\mathbf{u}_t) = 4.0(\cos(\theta)-1)^2+0.1p^2+0.1(\dot{\theta}^2+\dot{p}^2)+\mathbf{u}^2\) and the terminal cost is defined as \(c_f(\mathbf{x}_T) = 4.0(\cos(\theta)-1)^2\). We choose the simple hybrid mode $\mathcal{M}$ as a set of the discrete controls bounded by the saturation limits. Specifically, controls are uniformly spaced between $u_{min}$ and $u_{max}$

    Figure~\ref{fig: cost for cartpole} demonstrates the results for our method and other sampling approaches. 
    We observe that our approach consistently finds optimal solutions across different horizons by actively considering time as a decision variable.
    In contrast, other sampling-based methods fail to identify good optima as the horizon increases, owing to the rapidly expanding search space. Therefore the performance of other methods further deteriorates with longer horizons, as their limited sample size prevents them from effectively solving the control problem. 
    By comparison, our method continues to improve performance as the horizon increases.

\begin{figure}
    \vskip 7pt
    \centering
    \includegraphics[width=\linewidth]{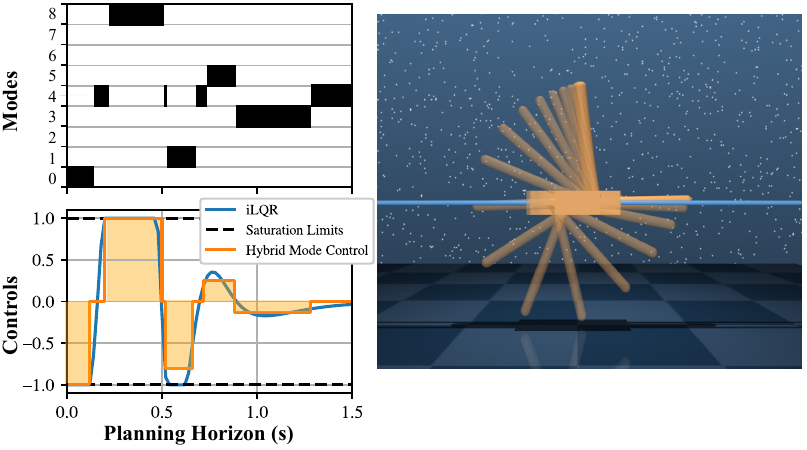}
    \vskip -5pt
    \caption{\textbf{Top Left}: The optimal control mode sequences for the cartpole system, where the Y axis denotes the index for each corresponding sampled modes. \textbf{Bottom Left}: The resulting control applied to the actuators for the cartpole system are shown. The blue curve represents the control generated by the iLQR algorithm. The orange curve represents the executed control found using our sample-based hybrid scheduler, where the shaded areas denote the resulting distinguish modes. \textbf{Right}: The visualization of the cartpole trajectory with controls found by our method.}
    \label{fig:comparison}
    \vskip -10pt
\end{figure}

We also compare the control sequence found by our method to the gradient-based iLQR method in Figure~\ref{fig:comparison}. The control sequence found by our method closely resembles the optimal iLQR sequence, even with a few modes.

\begin{figure}
    \vskip 5pt
    \centering
    \includegraphics[width=\linewidth]{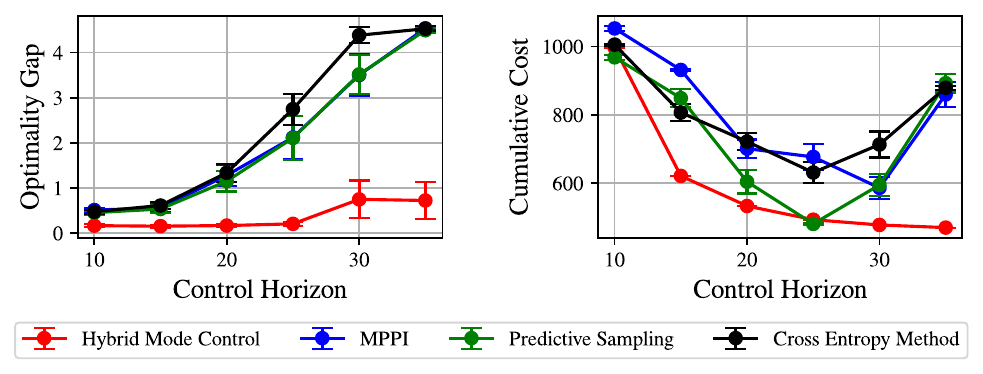}
    \vskip -5pt
    \caption{\textbf{Top}: Results for the short-horizon subproblem of cartpole system. The Y-axis quantifies the optimality gap, defined as the difference between the objective value and a locally optimal solution obtained via iterative LQR. To ensure scale invariance across tasks, we normalize this gap by the planning horizon length $H$. Here error-bar denotes standard deviation across $5$ random seeds. The performance of classical sampling methods deteriorates as the horizon length increases, whereas our method consistently finds good minima, even with a limited sample size. \textbf{Bottom}: Overall performance under a predictive control framework. Both methods show a decrease in cumulative cost as the horizon increases, as a longer horizon reduces myopic decision-making. However, as the horizon continues to grow, classical methods experience an increase in total cost due to their inability to effectively optimize the sub-problem, whereas our method continues to decrease the overall cost.}
    \label{fig: cost for cartpole}
    \vskip -10pt
\end{figure}

\begin{figure*}
    \centering
    \includegraphics[width=\linewidth]{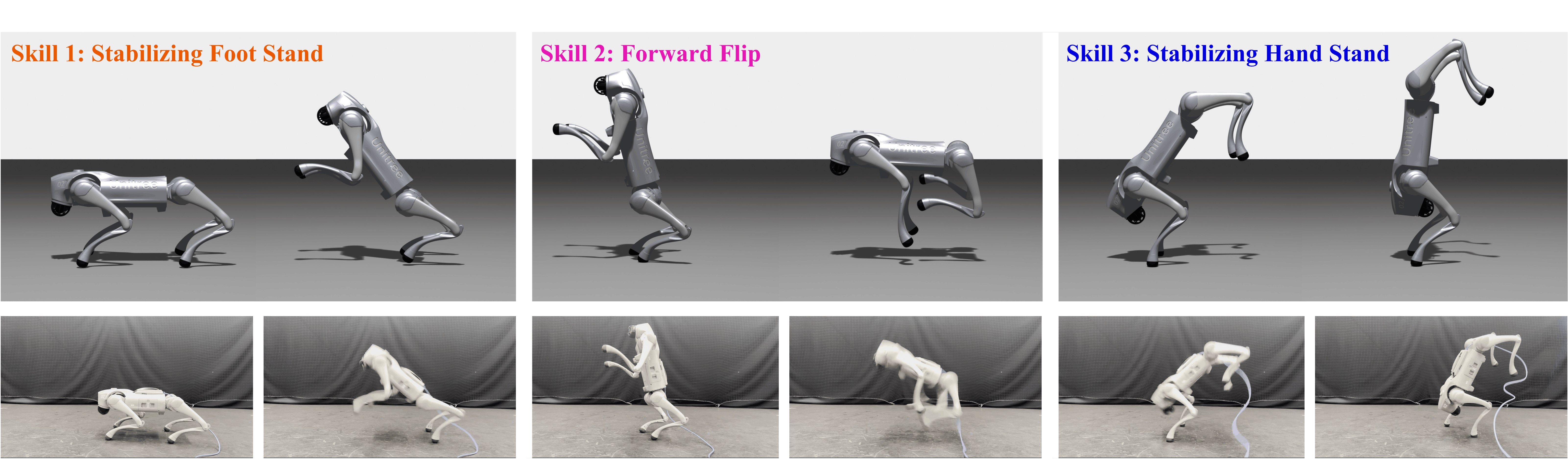}
    \vskip -5pt
    \caption{We verify our method on the real Unitree Go2 quadruped. From left to right, the robot successfully transitions through foot-stand, jump-flip, and hand-stand behaviors. We update the control sequence in real time with onboard sensing and computation, demonstrating the efficiency of our method.}
    \vskip -2pt
    \label{fig:hardware_exps}
\end{figure*}

\subsection{Scalability to High-dimensional Task} Next, we investigate the scalability of our proposed approach to more complex and higher-dimensional control tasks. We start with simulation experiments to demonstrate scalability to high-dimensional tasks shown in Figure \ref{fig:figure_1}. The robot is tasked to stand on its hind legs and perform a flip directly to foreleg stance and balance. The task can be performed in three phases of locomotion, each of which requires distinct control strategies: stabilizing hind leg control policy, high speed predictive forward momentum swing towards the foreleg stance, and a stabilizing foreleg balancing control policy. We specifically choose this experiment because those three stages requires distinct control strategies and whole-body coordination. 

\textbf{Task Design:} The quadruped example uses the Unitree Go2 robot for both simulation and hardware experiment. The dimension for quadruped state space $x_k\in \mathbb{R}^{48}$ and positional control inputs $u_k\in \mathbb{R}^{12}$. Here, state $x_k$ contains the gravity projected in the body frame, base position, base linear and angular velocity, joint positions and velocities, and previous action. Control inputs $u_k$ contain desired robot motor target angles in radians. 

\textbf{Hybrid Mode Design:} We design three hybrid modes consisting of learning-based foot stand and hand stand controller, and a sampling-based model predictive controller (MPC). More modes can be added as long as they generate distinct motor skills. All three controllers share the same observation and action space. Specifically, our hybrid modes are defined as the following: 
\begin{equation}
    \mathcal{M} = \{F_{f}(x_k,k), F_{h}(x_k,k), F_j(x_k,k) \}
\end{equation} where $F_f$ is the closed loop system under the foot stand policy $\pi_f(x_k)$, $F_h$ is the closed loop system subject to the hand stand policy $\pi_h$, and $\pi_j(x_k)$ is the jump forward $F_j(x_k,k)$ mode is the closed-loop system driven by a model predictive controller. The loss function is the sum of the individual subgoals $\ell(x_t, u_t) = \sum_i \omega_i l_i$ defined in Table \ref{table:reward-functions}:

\begin{table}[h!]
    \centering
    \setlength{\tabcolsep}{4pt}  
    \small
    \caption{Loss Terms}
    \begin{tabular}{c | c}
    \toprule
         \textbf{Loss Term} & \textbf{Expression} \\
         \midrule
        Height Tracking  & $l_\text{h} = 1.5 \cdot \|z^{tar} - z\|^2$  \\
        Orientation Tracking & $l_\text{ori} = 1.0 \cdot \| \phi_\text{body,xy} - \phi^\text{tar}_\text{body,xy}\|^2$\\
        Joint Position & $l_q = 0.5 \cdot \|q - q_\text{nominal}\|^2$  \\
        Action Rate & $l_\text{rate} = 0.001 \cdot \|u_t - u_{t-1}\|^2$ \\
        Energy Consumption & $l_\text{energy} = 0.003 \cdot \|\dot{q} \cdot \tau\|$ \\
        Pose Deviation & $l_\text{pose} = 2.5e-7 \cdot \exp\left(-\|q - q_\text{default}\|^2\right)$  \\
    \bottomrule
    \end{tabular}
    \vskip -5 pt
    \label{table:reward-functions}
\end{table}

For policy training process, we use PPO \cite{schulman2017proximalpolicyoptimizationalgorithms} as training backbone and adopt an asymmetric actor-critic settings \cite{pinto2017asymmetricactorcriticimagebased}, where the policy network and value network accept different observation inputs. Specifically, the policy network received the inputs mentioned earlier while the value network receive additional robot actuator forces and torso height. To effectively deploy the learned policy and model predictive control in simulation, we use Brax \cite{brax2021github} since it can effectively combine policy inference and online sampling in the same control framework without converting data type. \textbf{Policy and Value network design}: The objective of the policy to maximize the expected return of a policy. Here, we flip the sign of the cost function to represent reward function for learning. Both networks use a three-layer Multilayer Perceptron (MLP) with hidden sizes of 512, 256, and 128. In between each hidden layers, we use Swish\cite{ramachandran2017searchingactivationfunctions} as activation function. \textbf{MPC Design}: We adopt MPPI as backbone for sample-based model predictive control. The goal of the MPPI is to find best control perturbations of a future sequence of actions that best minimize the loss. We set planning horizon to be 20 steps forward and sample size to be 25. The temperature coefficient of MPPI is fixed to 0.1. 

\begin{figure}
  \begin{minipage}[t!]{.5\columnwidth}
    \includegraphics[width=\columnwidth]{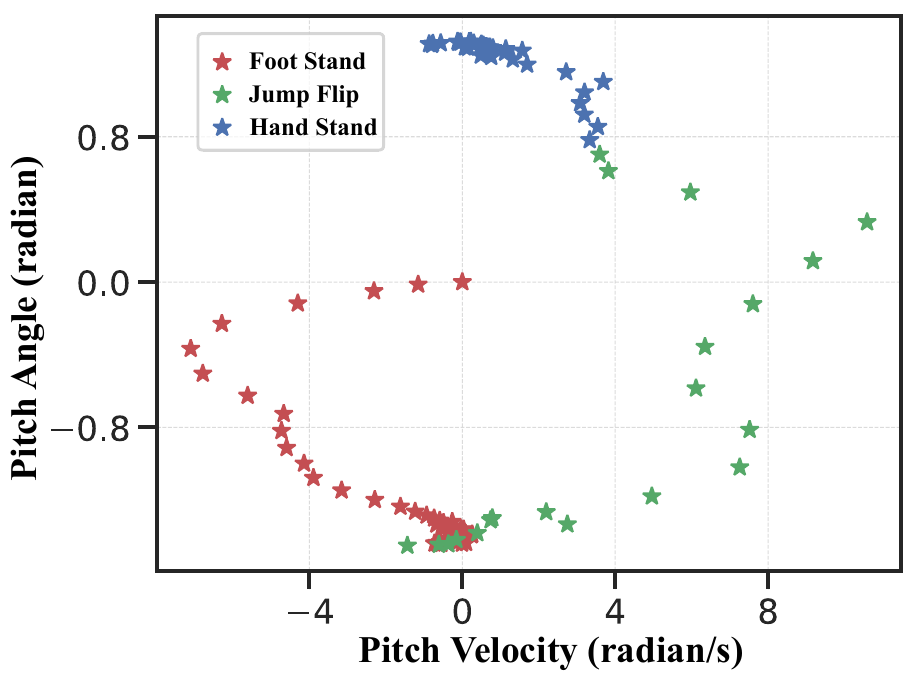}
  \end{minipage}
  \begin{minipage}[c]{.48\columnwidth}
    \vspace*{.05cm}
    \caption{\textbf{State Visitation of Robot Pitch Rotation:} We collect the trajectory robot motion and visualize the distribution of robot pitch angular position and velocity in the base frame. We show that our method can effective solving for control solutions that resides in distinct areas by optimizing control mode sequences.}
    \label{fig:state_visitation}
  \end{minipage}
  \vspace*{-.20cm}
\end{figure}

\subsection{Comparison to baseline control methods.}
We compare our approach against four baselines: (1) a unified policy network trained with PPO, referred to as PPO-Only; (2) pure model predictive control methods, including MPPI-Only~\cite{Mppi}, CEM-Only~\cite{cem}, and PS-Only (predictive sampling)~\cite{predictivesampling}; (3) predefined hybrid mode sequences, denoted as Predefined Mode Seq; and (4) our sample-based hybrid mode control combined with MPPI, denoted as Hybrid Mode Control. We choose these baselines because alternative hybrid-control and hierarchical reinforcement-learning approaches, such as the Logical Options Framework (LOF) \cite{pmlr-v139-araki21a}, are computationally expensive and require substantially more data, whereas our method composes complex locomotion behaviors from a simple policy-learning framework.
For evaluation, we assess whether each method can solve the task fully or partially, as well as the cumulative cost, which each method aims to minimize.
As shown in Table~\ref{tab:comparison}, we conducted five trials for each methods and our methods can sccuessfully perform complete flip and balance
\begin{table}
    \centering
    \setlength{\tabcolsep}{4pt}  
    \small
    \caption{Comparison with Existing Methods}
    \begin{tabular}{c | c c c | c}
    \toprule
         & \textbf{FootStand} & \textbf{JumpFlip} &  \textbf{HandStand} & \textbf{Cost} $\downarrow$ \\
         \midrule
        PPO-only  & \checkmark & $\times$ & $\times$ & 34.03 \\
        MPPI-Only & \checkmark & $\times$ & $\times$ & 46.73 \\
        CEM-Only & \checkmark & $\times$ & $\times$ & 55.68 \\
        PS-Only & \checkmark & $\times$ & $\times$ & 48.39 \\
        Fixed Seq & \checkmark & \checkmark & $\times$ & 22.24\\
        \textbf{Ours} & \checkmark & \checkmark & \checkmark & \textbf{13.519} \\
    \bottomrule
    \end{tabular}
    \vskip -10 pt
    \label{tab:comparison}
\end{table}

We observe that the robot falls quickly during the first transition period suggesting that solely using one control method is not enough to execute all motions. While training a single policy can successfully perform foot stand, the policy quickly fall during the flip phase. We hypothesis that a single MLP policy network cannot capture this multi-modal behaviors. 
We also implement a hybrid mode control method similar to ours, but with predefined modes.
The robot can safely transition to the flip phase but fails to adjust its pose into a handstand. In contrast, our approach is able to reason about the multi-modal nature of the objective and sequence the appropriate control modes. As shown in Fig.~\ref{fig:state_visitation}, each single-strategy controller learns to operate effectively only within a limited region of the state space. However, our method actively considers all single-strategy controllers and provides an effective way to generate more complex behaviors.

\subsection{Hardware Experiments}
We further validate our method in the real-world experiment using the quadrupedal robot Unitree Go2. As shown in Fig. \ref{fig:hardware_exps}, we successfully deploy our hybrid mode control method for agile foot stand, jump flip, and hand stand. The policy is converted to CPU runtime using Open Neural Network Exchange (ONNX) \cite{onnxruntime} to ensure real-time performance. We run the proposed method at 50 Hz on a single Intel i7-12700H CPU with 32 GB of memory. Additionally, we implement state estimation based on an extended Kalman filter using onboard sensing. In comparison to existing sample-based control framework \cite{fullordersampling, realtimewholebodycontrollegged} relying on high accurate mocap system, our method only uses onboard sensing, demonstrating robustness under noisy state measurements.

\section{Conclusions} 
\label{sec:conclusion}
This work presents a sample-based hybrid mode control method inspired by hybrid control theory. Our approach offers an alternative modeling scheme for sampling within a predictive-control framework. By actively searching for control modes, application time, and duration as integer-based optimization, our method can effectively be applied to tasks that require complex control compositions. Moreover, by reasoning over algorithmic and learned control modes, our method can synthesize complex behavior for high-dimensional systems.

There are several limitations and future directions to consider. Sample-based control typically requires an accurate contact model—one can only simulate what is well-represented in that model. This reliance on a precise model constrains real-world applications, particularly in unstructured environments or scenarios where obtaining a reliable model is challenging.
Future research directions include integrating our methods with data-driven approaches that do not require explicit modeling. 

\section{Acknowledgments}
This work is supported by the National Science Foundation
under award NSF FRR 2238066. Any opinions, findings, and
conclusions or recommendations expressed in this material are
those of the authors and do not necessarily reflect the views
of the National Science Foundation.

\bibliographystyle{IEEEtran} 
\bibliography{references}

\end{document}